\documentclass[conference]{IEEEtran}
\IEEEoverridecommandlockouts
\usepackage[sort&compress,numbers]{natbib}
\usepackage{csquotes}
\usepackage{amsmath,amssymb,amsfonts}
\usepackage{algorithmic}
\usepackage{graphicx}
\usepackage{textcomp}
\usepackage{tikz} 
\usepackage{xcolor}
\definecolor{promptboxcolor}{RGB}{193, 255, 193} 
\usetikzlibrary{positioning} 
\usepackage{tabularx}
\usepackage{booktabs}
\usepackage{amsmath, amsthm, amsfonts}

\usepackage{mathtools}

\theoremstyle{definition}
\newtheorem{definition}{Definition}
\usepackage{tabularx, booktabs}

\usepackage{amsmath, amsfonts, amssymb, amsthm}
\usepackage{geometry}
\newtheorem{theorem}{Theorem}

\def\BibTeX{{\rm B\kern-.05em{\sc i\kern-.025em b}\kern-.08em
    T\kern-.1667em\lower.7ex\hbox{E}\kern-.125emX}}

\begin{document}

\title{The Illusion of Equivalency: Statistical Characterization of Quantization Effects in LLMs}

\author{
\IEEEauthorblockN{
Baha Rababah$^{1,2}$,
Shahzeb Qamar$^3$,
Lorenz Sparrenberg$^3$,
Rafet Sifa$^3$\\
Murat Kantarcioglu$^4$,
Cuneyt Gurcan Akcora$^5$,
Carson K. Leung$^1$
}
\IEEEauthorblockA{
\textit{$^1$ Department of Computer Science, University of Manitoba},
Winnipeg, MB, Canada\\
\textit{$^2$ Applied Computer Education Department, Red River College Polytechnic},
Winnipeg, MB, Canada\\
\textit{$^3$ Bonn-Aachen International Center for IT (b-it), University of Bonn},
Bonn, Germany\\
\textit{$^4$ Department of Computer Science, Virginia Tech University},
Blacksburg, Virginia, USA\\
\textit{$^5$ AI Initiative, University of Central Florida},
Orlando, FL, USA\\
rababahb@myumanitoba.ca, 
shahzeb.qamar@iais.fraunhofer.de, 
lsparren@uni-bonn.de\\
Rafet.Sifa@bit.uni-bonn.de, 
muratk@vt.edu, 
cuneyt.akcora@ucf.edu, 
carson.leung@umanitoba.ca
}
}

\maketitle

\begin{abstract}
Post-Training Quantization has become widely used to compress large language models to make them deployable on resource-constrained devices. However, the evaluation of quantization methods mainly uses accuracy and perplexity, which cannot capture the behavioral changes in the quantized variants. In this work, we propose Correctness Agreement, a decision-level metric that can measure the intersection of correct predictions between the base model and its quantized variant. We use this metric across multiple models and quantization bit levels (8-bit to 2-bit), and we find that the base and quantized variants usually have a shift in behavior even when accuracy and perplexity are preserved. In order to explain this effect, we study the effect of quantization on the structure of the attention weights using statistical and distributional measures. The results reveal a breakpoint at low bit widths and show that query and key projections are more sensitive to quantization than the value and output projections. These results prove the illusion of equivalency between the base and quantized models and inspire behavioral evaluation beyond perplexity and accuracy for quantization methods.
\end{abstract}

\begin{IEEEkeywords}
Big data, Large Language Models (LLMs)
\end{IEEEkeywords}

\section{Introduction}
\label{sec:Introduction}

Recently, advancements in large language models (LLMs) have enhanced text generation and reasoning capabilities~\cite{ziyu2023through}, but their large size makes their deployment on resource-constrained devices challenging~\cite{Husom2025Sustainable}. Quantization converts a pretrained model parameters from floating points to lower-precision representations after training. It has emerged as an effective AI approach to reduce memory footprint, accelerate inference, and reduce energy consumption~\cite{jin2024Comprehensive}. However, quantization can degrade performance and change model behavior, especially under low-bit precision settings such as 2-bits~\cite{dutta2024Accuracy}.

The main objective of quantization is to keep the functional behavior of the original model while reducing the model's size. In practice, behavioral preservation can not be measure by perplexity or task-level accuracy to include factual knowledge, reasoning robustness, consistent stylistic and safety-related behaviors to measure the behavioral stability and reliability. These properties are especially vulnerable to quantization, as small numerical changes in model parameters may lead to unexpected behavioral changes~\cite{dutta2024Accuracy}. Nowadays, existing quantization evaluations frameworks~\cite{Kurtic2025give,Li2025quantization,jin2024Comprehensive} use surface-level metrics such as accuracy, loss, and perplexity. These metrics do not fully capture whether quantization preserves behavioral consistency between the base model and its quantized version. In particular, models with similar accuracy or perplexity may still differ substantially in their predictions on individual examples. For example, A model could achieve $75\%$ accuracy at base model and $75\%$ accuracy at quantized model does not necessarily mean that the same predictions were preserved. For example, $15\%$ of the questions that were correctly answered at based could become incorrect at quantized, while another $15\%$ of previously incorrect questions could become correct. Thus, the overall accuracy remains unchanged despite substantial changes in individual predictions.

To address this limitation, we introduce \textit{correctness agreement}, a behavioral evaluation metric that measures the consistency of correct predictions between the original model and its quantized variants. It tracks whether the exact same individual questions remain correct across quantization levels. Unlike aggregate performance metrics, our metric directly assesses whether quantization preserves decision-level behavior. We also provide a deep analysis of weight changes across a wide range of quantization levels (8-bit to 2-bit). This enables a deep understanding of quantization as a continuous structural transformation and reveals how internal weight statistics and divergences changes with lower precision. Our analysis disclose non-linear breakpoints where small reductions in bit-width lead to irregular structural distortions, even when accuracy and perplexity appear stable. Furthermore, examining multiple quantization bit-levels further allows us to isolate layer-wise and architecture-dependent sensitivities, identifying which components of LLMs are inherently robust and which are sensitive to aggressive low-bit settings. 

Our {\em key contributions} are as follows:

\begin{itemize}
\item We introduce post-training quantization as an operator family and study its impact on attention projection weights and decision behavior.
\item We propose a statistical and divergence-based sensitivity analysis that quantifies layer-wise structural drift using moments and distributional distances under quantization.
\item We introduce \textit{correctness agreement}, a decision-level metric that measures overlap in correct predictions between a base model and its quantized variants and complements accuracy and perplexity.
\item To our knowledge, this work presents the first controlled sweep from 8-bit to 2-bit across legacy and K quantization that tracks block-level attention projection drift and links it empirically to decision behavior changes under post-training quantization.
\end{itemize}

\section{Background and Related Work}
\label{sec:Related Work}

Quantization reduces the numerical precision of neural network weights from floating point to lower bit representations in order to decrease model size, memory bandwidth usage, and inference cost on resource-constrained hardware~\cite{xiao2023smoothquant}. Existing approaches fall into two categories: Quantization-Aware Training (QAT) and Post-Training Quantization (PTQ)~\cite{Jacob2018Quantization}. QAT incorporates quantization during training and requires access to the training pipeline, making it expensive and impractical. In contrast, PTQ applies quantization directly to pretrained models without retraining, avoiding the cost of large scale optimization. For this reason, PTQ is a better choice in practice and has been widely adopted for compressing LLMs~\cite{dettmers2023spqr,kim2023squeezellm,xiao2023smoothquant,frantar2022gptq}. In this work, we use \texttt{llama.cpp}~\cite{gerganov2024llamacpp}, an open-source framework for PTQ. Its C/C++ implementation provides broad hardware support and can be used in research and production. In this work, we focus on two quantization families implemented in \texttt{llama.cpp}: legacy quantization and K-quantization~\cite{lee2025Quantization}, which together cover the full precision range from 8-bit to 2-bit~\cite{gerganov2024llamacpp}. Legacy quantization methods (e.g. \texttt{Q8\_0}, \texttt{Q5\_0}, \texttt{Q4\_0}) use block-wise uniform linear quantization with per-block scaling over fixed blocks of $k=256$ weights. K-quantization methods (e.g. \texttt{Q6\_K}, \texttt{Q5\_K}, \texttt{Q4\_K}, \texttt{Q3\_K}, \texttt{Q2\_K}) use a statistically informed block-based scheme that normalizes sub-blocks using estimates of the local mean and standard deviation. 

Table~\ref{tab:paper_comparison} summarizes evaluation dimensions across recent studies. Most prior work evaluate the quantization methods using accuracy and perplexity, which do not test whether the quantized model preserves the base model’s correct decisions on the same examples or connect internal weight distortions to decision behavior. Several studies focus on evaluating quantized LLMs that can be categorized into: (1) knowledge and capacity assessment, measuring correctness and reasoning ability \cite{Hendrycks2021Measuring,shi2024CORECODE}; and (2) alignment evaluation, assessing adherence to human preferences and safety \cite{Gehman2020RealToxicityPrompts,yin2023do}. Jin et al. \cite{jin2024Comprehensive} proposed a structured evaluation framework spanning knowledge, alignment, and efficiency, showing that 4-bit quantization can maintain model performance. Another work by Kurtic et al. \cite{Kurtic2025give} analyzed FP8, INT8, and INT4 quantization across the LLaMA-3.1 family, illustrating that FP8 is effectively lossless, INT8 results in minimal accuracy reduction, and INT4 weight only quantization performs better than expected. However, these studies still focus only on task level performance, without systematically linking weight space changes to behavioral consistency or layer-wise vulnerabilities.

\begin{table}
    \centering
     
    \setlength{\tabcolsep}{1pt}
\renewcommand{\arraystretch}{0.85}
 \caption{Comparison of evaluation in \textbf{stat}istical analyses, \textbf{div}ergence metrics, evaluation across multiple quantization \textbf{bits}, \textbf{layer}-wise sensitivity, \textbf{behav}ioral evaluation, and conventional \textbf{perf}ormance metrics.}
    \begin{tabular}{lcccccc}
        \toprule
        Article & Stats & Div. & Bits & Layer & Behav. & Perf. \\
        \midrule
         \cite{jin2024Comprehensive} & $\checkmark$ & \textcolor{red}{$\times$}     & $\checkmark$ & \textcolor{red}{$\times$}     & \textcolor{red}{$\times$}     & $\checkmark$ \\
        \cite{dutta2024Accuracy}    & \textcolor{red}{$\times$}     & $\checkmark$ & \textcolor{red}{$\times$}     & \textcolor{red}{$\times$}     & $\checkmark$ & $\checkmark$ \\
        \cite{Gong2024llmc}         & $\checkmark$ & \textcolor{red}{$\times$}     & \textcolor{red}{$\times$}     & $\checkmark$ & \textcolor{red}{$\times$}     & $\checkmark$ \\
        \cite{Kurtic2025give}        & \textcolor{red}{$\times$}     & \textcolor{red}{$\times$}     & \textcolor{red}{$\times$}     & \textcolor{red}{$\times$}     & \textcolor{red}{$\times$}     & $\checkmark$ \\
        \cite{Husom2025Sustainable} & \textcolor{red}{$\times$}     & \textcolor{red}{$\times$}     & \textcolor{red}{$\times$}     & \textcolor{red}{$\times$}     & \textcolor{red}{$\times$}     & $\checkmark$ \\
        \cite{Gungor2025AQUA}       & \textcolor{red}{$\times$}     & \textcolor{red}{$\times$}     & \textcolor{red}{$\times$}     & \textcolor{red}{$\times$}     & \textcolor{red}{$\times$}     & $\checkmark$ \\
        \cite{zhan2025quantized}    & \textcolor{red}{$\times$}     & \textcolor{red}{$\times$}     & \textcolor{red}{$\times$}     & \textcolor{red}{$\times$}     & \textcolor{red}{$\times$}     & $\checkmark$ \\
        \cite{Li2025quantization}   & \textcolor{red}{$\times$}     & \textcolor{red}{$\times$}     & \textcolor{red}{$\times$}     & \textcolor{red}{$\times$}     & \textcolor{red}{$\times$}     & $\checkmark$ \\
         \cite{zhang2026benchmarking}   & \textcolor{red}{$\times$}     & \textcolor{red}{$\times$}     & \textcolor{red}{$\times$}     & \textcolor{red}{$\times$}     & \textcolor{red}{$\times$}     & $\checkmark$ \\
          \cite{zhao2025benchmarking}   & \textcolor{red}{$\times$}     & \textcolor{red}{$\times$}     & \textcolor{red}{$\times$}     & \textcolor{red}{$\times$}     & \textcolor{red}{$\times$}     & $\checkmark$ \\
        \textbf{Our work}           & $\checkmark$ & $\checkmark$ & $\checkmark$ & $\checkmark$ & $\checkmark$ & $\checkmark$ \\
        \bottomrule
    \end{tabular}%
    \vspace*{-2mm}   
    \label{tab:paper_comparison}
\end{table}

\section{Problem Setup \& Methodology}
\label{sec:Problem_Setup}

In this section, we formalize post-training quantization (PTQ) as an operator on transformer parameters and define two metric spaces: (1) structural drift in weight space and (2) decision-level consistency in output space.

\subsection{Notation and Model Formalism}
\label{subsec:Notation_and_Model_Formalism}
Let $\mathcal{V}$ be a vocabulary and $\mathcal{X} \subseteq \mathcal{V}^\star$ be the set of token sequences. We consider a decoder-only transformer with $N$ blocks and hidden dimension $d$, parameterized by $\theta \in \mathbb{R}^P$, where $P$ is the total parameter count. For an input prompt $x \in \mathcal{X}$ and target sequence $y = (y_1, \dots, y_T) \in \mathcal{V}^\star$, the model induces next-token conditional probability distributions:
\begin{equation}
p_\theta(y \mid x) = \prod_{t=1}^T p_\theta(y_t \mid x, y_{<t}).
\end{equation}

For each transformer block index $i \in [N] := \{1, \dots, N\}$, let $W_i^Q, W_i^K, W_i^V, W_i^O \in \mathbb{R}^{d \times d}$ denote the self-attention projection matrices. We collect all attention parameters into the set:
\begin{equation}
\mathcal{W}_\theta := \left\{ W_i^L : i \in [N], \, L \in \{Q, K, V, O\} \right\}.
\end{equation}

\subsection{Post-Training Quantization (PTQ) Operators}
\label{subsec:PTQ_Operators}

We define a finite set of quantization configurations $\mathcal{C}$, where each $c \in \mathcal{C}$ specifies a bit-width $b(c) \in \{2, 3, 4, 5, 6, 8\}$ and a quantization scheme (e.g., legacy or K-quantization). Let $\mathcal{Q}_c$ and $\mathcal{D}_c$ represent the quantization and dequantization mapping functions, respectively. We model post-training quantization as an operator $T_c: \mathbb{R}^P \to \mathbb{R}^P$ acting on parameter space:
\begin{equation}
T_c(\theta) := \mathcal{D}_c(\mathcal{Q}_c(\theta)) \in \mathbb{R}^P.
\end{equation}

The resulting quantized-dequantized model parameterization is denoted as $T_c(\theta)$, and its corresponding parameter matrix for attention projection layer $L \in \{Q, K, V, O\}$ at block $i \in [N]$ is defined as $\widehat{W}_i^{L,(c)} := W_i^L(T_c(\theta))$. 

For a given block $i$ and projection layer $L$, the entry-wise weight error matrix is defined as:
\begin{equation}
\Delta_{W, i}^{L,(c)} := \widehat{W}_i^{L,(c)} - W_i^L.
\end{equation}
When evaluating across all parameters in the concatenated parameter vector $\theta$, the overall parameter perturbation vector is denoted as $\Delta_\theta^{(c)} := T_c(\theta) - \theta$.

\subsection{Weight-Space Divergence and Statistical Drift}
\label{subsec:Weight_Space_Divergence_and_Statistical_Drift}

To measure how quantization alters internal parameter distributions, we define two classes of weight-space functionals evaluated across block projections $L \in \{Q, K, V, O\}$.

\paragraph{Statistical Moments.}
Let $s(\cdot)$ be a statistical functional (mean, standard deviation, skewness, or kurtosis) evaluated over the multiset of entries of a weight tensor. For block $i$, the block-level statistic is:
\begin{equation}
\overline{s}_i(\theta) := \frac{1}{4} \sum_{L \in \{Q, K, V, O\}} s\left(W_i^L(\theta)\right).
\end{equation}
Aggregating across all $N$ blocks yields the model-level average:
\begin{equation}
\label{eq:stat_mean_score}
\overline{s}(\theta) := \frac{1}{N} \sum_{i=1}^N \overline{s}_i(\theta), \quad \text{and} \quad \overline{s}^{(c)} := \frac{1}{N} \sum_{i=1}^N \overline{s}_i\left(T_c(\theta)\right).
\end{equation}
The overall statistical drift under configuration $c$ is given by $\Delta \overline{s}^{(c)} := \overline{s}^{(c)} - \overline{s}(\theta)$. To isolate specific layer vulnerabilities, we also compute individual projection averages:
\begin{equation}
\label{eq:attentopn_mean_stat}
\overline{s}_L := \frac{1}{N} \sum_{i=1}^{N} s\left(W_i^L\right).
\end{equation}

\paragraph{Distributional Divergence.}
Let $d(\cdot, \cdot)$ be a divergence or distance metric defined over flattened weight vectors $\mathrm{vec}(W)$ or empirical histograms (specifically Kolmogorov--Smirnov statistic, Euclidean distance, Cosine similarity, and Kullback--Leibler divergence). The block-level and model-level divergences are defined respectively as:
\begin{align}
\label{eq:distributional_divergence}
\overline{d}_i^{(c)} := \frac{1}{4} \sum_{L \in \{Q,K,V,O\}} d\left(\mathrm{vec}(W_i^L), \mathrm{vec}(\widehat{W}_i^{L,(c)})\right), \\
\quad \overline{d}^{(c)} := \frac{1}{N} \sum_{i=1}^N \overline{d}_i^{(c)}.
\end{align}

\subsection{Behavioral Consistency Metrics}
\label{subsec:Behavioral_Consistency}

To evaluate model behavior on downstream task datasets $\mathcal{D} = \{(x^{(m)}, y^{(m)})\}_{m=1}^M$, let $g_\theta(x)$ denote the model's predicted output under a deterministic scoring functional.

For each instance $m \in [M]$, we define binary correctness indicators for the base model $M_0$ (parameterized by $\theta$) and quantized model $M_q$ (parameterized by $T_c(\theta)$):
\begin{align}
C_0(x^{(m)}) &= \mathbb{I}\left(g_\theta(x^{(m)}) = y^{(m)}\right), \\
C_q(x^{(m)}) &= \mathbb{I}\left(g_{T_c(\theta)}(x^{(m)}) = y^{(m)}\right).
\end{align}

Standard task-level accuracies are given by the expectations $\text{Acc}(M_0) = \mathbb{E}[C_0(x)]$ and $\text{Acc}(M_q) = \mathbb{E}[C_q(x)]$. However, aggregate accuracy completely ignores instance-level behavioral stability that is, whether $M_q$ preserves correct predictions on the exact same individual prompts relative to $M_0$.

\begin{definition}[Correctness Agreement]
Correctness Agreement ($\mathrm{CA}$) measures the probability that both the base model $M_0$ and the quantized model $M_q$ concurrently output correct predictions relative to the target labels across dataset $\mathcal{D}$:
\begin{equation}
\begin{aligned}
\mathrm{CA}(M_0, M_q)
&\triangleq \mathbb{P}_{x \sim \mathcal{D}}
\left(C_0(x) = 1 \land C_q(x) = 1\right) \\
&= \frac{1}{M} \sum_{m=1}^M
\mathbb{I}\Big(
C_0(x^{(m)}) = 1 \\
&\hspace{2.5cm}\land\; C_q(x^{(m)}) = 1
\Big).
\end{aligned}
\end{equation}
\end{definition}

Unlike aggregate task accuracy, $\mathrm{CA}$ explicitly quantifies preserved correct choices between $M_0$ and $M_q$, isolating true internal decision persistence from coincidental accuracy matching.


\section{Theoretical Analysis of Decision Behavior}
\label{sec:Theoretical_Analysis_Decision_Behavior}
In this section, we analyze the decision-level behavioral relationship between a base model $M_0$ and its quantized variant $M_q$. We prove that standard aggregate accuracy is fundamentally incapable of guaranteeing behavioral stability at the instance level and establish theoretical bounds on Correctness Agreement along with its relation to inter-model decision reliability.

\subsection{Problem Definition \& Joint Probability Matrix}
\label{subsec:Problem_Definition_Joint_Probability_Matrix}

Let $x \in \mathcal{X}$ be an input prompt and $y \in \mathcal{Y}$ be the ground-truth target. Recall the binary correctness indicator random variables $C_0(x), C_q(x) \in \{0, 1\}$ for $M_0$ and $M_q$, respectively. The marginal expected accuracies over data distribution $P(\mathcal{X}, \mathcal{Y})$ are given by:
\begin{align}
\text{Acc}_0 = \text{Acc}(M_0) = \mathbb{E}_{x \sim P}[C_0(x)] = p_{1\cdot}, \\
\quad \text{Acc}_q = \text{Acc}(M_q) = \mathbb{E}_{x \sim P}[C_q(x)] = p_{\cdot 1}.
\end{align}

We model the joint decision space of $(C_0, C_q)$ using a $2 \times 2$ contingency probability matrix:

\begin{table}[htbp]
\centering
\caption{Joint distribution matrix of decision correctness between base model $M_0$ and quantized model $M_q$.}
\label{tab:joint_contingency}
\small
\setlength{\tabcolsep}{3pt}
\begin{tabularx}{\columnwidth}{@{} X c c c @{}}
\toprule
 & $C_q = 1$  & $C_q = 0$  & Total \\
\midrule
$C_0 = 1$  & $p_{11}$ & $p_{10}$ & $p_{1\cdot} = \mathrm{Acc}_0$ \\
\addlinespace
$C_0 = 0$  & $p_{01}$ & $p_{00}$ & $p_{0\cdot} = 1 - \mathrm{Acc}_0$ \\
\midrule
Total & $p_{\cdot 1} = \mathrm{Acc}_q$ & $p_{\cdot 0} = 1 - \mathrm{Acc}_q$ & $1.0$ \\
\bottomrule
\end{tabularx}
\end{table}

The joint outcome probabilities in Table~\ref{tab:joint_contingency} correspond to four distinct behavioral regimes:
\begin{itemize}
    \item $p_{11} = \mathbb{P}(C_0 = 1, C_q = 1)$: \textit{Preserved Correctness} ($\mathrm{CA}$).
    \item $p_{00} = \mathbb{P}(C_0 = 0, C_q = 0)$: \textit{Preserved Error} (both models predict incorrectly).
    \item $p_{10} = \mathbb{P}(C_0 = 1, C_q = 0)$: \textit{Catastrophic Regress} (quantization degrades a correct decision).
    \item $p_{01} = \mathbb{P}(C_0 = 0, C_q = 1)$: \textit{Spurious Progress} (quantization turns an error into a correct prediction).
\end{itemize}

By definition, Correctness Agreement corresponds directly to joint correctness:
\begin{equation}
\mathrm{CA}(M_0, M_q) = p_{11}.
\end{equation}

\subsection{Accuracy Invariance Blindness Theorem}
\label{subsec:Accuracy_Invariance_Blindness_Theorem}

We now prove that matching task accuracies between a base model and its quantized counterpart provides no assurance that individual decision outputs remain consistent.

\begin{theorem}[Accuracy Invariance Blindness]
Equal aggregate accuracies between $M_0$ and $M_q$ do not imply decision stability. The Correctness Agreement $\mathrm{CA}$ can vary widely even when $\Delta \text{Acc} = |\text{Acc}_0 - \text{Acc}_q| = 0$.
\end{theorem}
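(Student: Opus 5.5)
The plan is to fix a common accuracy $a := \mathrm{Acc}_0 = \mathrm{Acc}_q \in [0,1]$ and describe every joint distribution in Table~\ref{tab:joint_contingency} compatible with it. Then we compute the exact range of $\mathrm{CA} = p_{11}$ over that set and show it is a nondegenerate interval whenever $0<a<1$.

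First we parametrize the table. Its marginal constraints are $p_{11}+p_{10} = a$, $p_{11}+p_{01} = a$, and $p_{11}+p_{10}+p_{01}+p_{00}=1$, together with nonnegativity of all four cells. Setting $t := p_{11}$ gives
\begin{equation*}
p_{10} = p_{01} = a - t, \qquad p_{00} = 1 - 2a + t .
\end{equation*}
So equal accuracies force the two kinds of disagreement, $p_{10}$ and $p_{01}$, to balance exactly. Nonnegativity then requires $t \le a$ and $t \ge 2a-1$, together with $t\ge 0$. These are the Fr\'echet--Hoeffding bounds:
\begin{equation*}
\max(0,\,2a-1) \;\le\; \mathrm{CA}(M_0,M_q) \;\le\; a .
\end{equation*}

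Second, we show that every value in this interval is actually attained, so the interval is sharp. Take an input space of $M$ prompts, or a distribution with an atomless part, and let $M_0$ be correct on a set $A$ of measure $a$. For any admissible $t$, choose $M_q$ to be correct on a set $B$ with $|B|=a$ and $|A\cap B| = t$. Such a $B$ is obtained by keeping $t$ of $A$ and adding $a-t$ from $A^c$, which is possible because $a-t \le 1-a$. Whether this is realizable by actual predictors $g_\theta$ and $g_{T_c(\theta)}$ is not a concern: the statement concerns the correctness indicators, and a predictor can output $y$ or some $y'\neq y$ arbitrarily as long as $|\mathcal{Y}|\ge 2$. The width of the interval is $a - \max(0,2a-1) = \min(a,1-a)$, which is strictly positive for $a\in(0,1)$. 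For $a=1/2$, $\mathrm{CA}$ ranges over all of $[0,1/2]$, from full agreement down to no shared correct answer. A concrete instance is the introduction's example, $a=0.75$ with a $15\%$ swap, giving $\mathrm{CA}=0.60$, compared with the extremes $0.75$ and $0.5$.

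The only delicate step is the finite-sample setting, where $M$ is fixed. There $t$ must be a multiple of $1/M$, so the statement becomes that $\mathrm{CA}$ takes every value in $\{k/M\}$ between $\max(0,2aM-M)/M$ and $a$. I would handle this by stating the discrete version alongside the continuous one. Finally, I would note that $\Delta\mathrm{Acc}=0$ pins down neither $p_{11}$ nor the disagreement mass $2(a-t)$. Together these show that accuracy invariance is blind to decision stability.
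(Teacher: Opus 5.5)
Your proposal is correct and follows the same route as the paper: you parametrize the contingency table by $t=p_{11}$ under equal marginals, obtain $p_{10}=p_{01}=a-t$ and $p_{00}=1-2a+t$, and read off $\max(0,2a-1)\le \mathrm{CA}\le a$ from nonnegativity. Your attainability construction and your observation that the width $\min(a,1-a)$ is positive only for $a\in(0,1)$ go slightly beyond the paper, which derives only the necessary range and allows $\alpha=1$, where the interval collapses to a single point.
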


\begin{proof}
Suppose $\text{Acc}(M_0) = \text{Acc}(M_q) = \alpha$, where $\alpha \in (0, 1]$. From the marginal constraints:
\begin{equation}
p_{1\cdot} = p_{11} + p_{10} = \alpha \implies p_{10} = \alpha - p_{11},
\end{equation}
\begin{equation}
p_{\cdot 1} = p_{11} + p_{01} = \alpha \implies p_{01} = \alpha - p_{11}.
\end{equation}
Since $p_{10} \ge 0$ and $p_{01} \ge 0$, we have $p_{11} \le \alpha$. Furthermore, from the non-negativity of $p_{00} = 1 - 2\alpha + p_{11}$, we require $p_{11} \ge \max(0, 2\alpha - 1)$. 

Substituting $\mathrm{CA}(M_0, M_q) = p_{11}$ yields the valid variation range:
\begin{equation}
\mathrm{CA}(M_0, M_q) \in \left[ \max(0, \, 2\alpha - 1), \; \alpha \right].
\end{equation}
\end{proof}

\noindent\textbf{Practical Implications:} Consider a base model and quantized model both exhibiting $\text{Acc} = 60\%$ ($\alpha = 0.6$). According to Theorem 1, $\mathrm{CA}$ can legally take any value between $\max(0, 1.2 - 1) = 0.20$ ($20\%$) and $0.60$ ($60\%$). Surface-level accuracy completely conceals internal instance-level decision instability.

\subsection{Bounds \& Relation to Cohen's Kappa}
\label{subsec:Bounds_Relation_to_Cohen's_Kappa}

Having demonstrated accuracy blindness, we establish the strict mathematical bounds governing $\mathrm{CA}$ as a function of marginal accuracies and connect it to chance-corrected reliability.

\begin{theorem}[Upper and Lower Bounds of Correctness Agreement]
\label{thm:ca_bounds}
Correctness Agreement $\mathrm{CA}(M_0, M_q)$ is strictly bounded by the aggregate accuracies $\mathrm{Acc}_0$ and $\mathrm{Acc}_q$:
\begin{equation}
\begin{aligned}
\max\left(0, \, \mathrm{Acc}_0 + \mathrm{Acc}_q - 1\right)
&\leq \mathrm{CA}(M_0, M_q) \\
&\leq \min\left(\mathrm{Acc}_0, \, \mathrm{Acc}_q\right).
\end{aligned}
\end{equation}
\end{theorem}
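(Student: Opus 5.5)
The plan is to work directly with the $2\times 2$ joint contingency matrix of Table~\ref{tab:joint_contingency}, in the same spirit as the proof of Theorem~1 but without assuming equal marginals. The only facts needed are that the four cell probabilities $p_{11},p_{10},p_{01},p_{00}$ are non-negative, that they sum to one, and that they satisfy the marginal constraints $p_{11}+p_{10}=\mathrm{Acc}_0$ and $p_{11}+p_{01}=\mathrm{Acc}_q$. Throughout I will use $\mathrm{CA}(M_0,M_q)=p_{11}$.

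For the upper bound, I would note that $p_{10}=\mathrm{Acc}_0-p_{11}\ge 0$ gives $p_{11}\le \mathrm{Acc}_0$. Symmetrically, $p_{01}=\mathrm{Acc}_q-p_{11}\ge 0$ gives $p_{11}\le\mathrm{Acc}_q$. Together these yield $\mathrm{CA}\le\min(\mathrm{Acc}_0,\mathrm{Acc}_q)$.

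For the lower bound, I would express the remaining cell as
\[
p_{00}=1-p_{11}-p_{10}-p_{01}=1-\mathrm{Acc}_0-\mathrm{Acc}_q+p_{11}.
\]
Non-negativity of $p_{00}$ then forces $p_{11}\ge \mathrm{Acc}_0+\mathrm{Acc}_q-1$. Combining this with $p_{11}\ge 0$ gives $\mathrm{CA}\ge\max(0,\mathrm{Acc}_0+\mathrm{Acc}_q-1)$. This is just the Fréchet--Hoeffding bound for the intersection of the events $\{C_0=1\}$ and $\{C_q=1\}$; equivalently, it follows from $\mathbb{P}(A\cap B)=\mathbb{P}(A)+\mathbb{P}(B)-\mathbb{P}(A\cup B)$ together with $\mathbb{P}(A\cup B)\le 1$.

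There is no real obstacle in this argument. The only point that needs care is the word ``strictly'' in the statement: the bounds are \emph{sharp} rather than strict inequalities. I would therefore add a short attainability remark. For any feasible marginals $(\mathrm{Acc}_0,\mathrm{Acc}_q)$, setting $p_{11}$ equal to either endpoint and filling the other three cells from the marginal constraints gives non-negative entries. The upper endpoint is the case of nested correct sets, and the lower endpoint is the case of maximally disjoint correct sets. Since $p_{11}$ can also take any value in between, every value in the interval is realized, which recovers Theorem~1 as the special case $\mathrm{Acc}_0=\mathrm{Acc}_q=\alpha$. The same computation applies to the empirical distribution on $\mathcal{D}$, because the argument uses only the fact that the $p_{ij}$ form a probability table.
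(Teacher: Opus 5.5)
Your proof is correct and follows the paper's argument. The upper bound comes from $p_{11}\le p_{1\cdot}=\mathrm{Acc}_0$ and $p_{11}\le p_{\cdot 1}=\mathrm{Acc}_q$, and the lower bound is the Fr\'echet inequality; the only difference is that you derive it directly from $p_{00}=1-\mathrm{Acc}_0-\mathrm{Acc}_q+p_{11}\ge 0$, as in the proof of Theorem~1, where the paper simply cites it. Your attainability remark is a correct addition that justifies reading ``strictly'' as ``sharply,'' though on a finite $\mathcal{D}$ the intermediate values of $p_{11}$ are restricted to multiples of $1/M$.
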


\begin{proof}
\textbf{Upper Bound:} \\
Since $p_{11} \le p_{1\cdot} = \mathrm{Acc}_0$ and $p_{11} \le p_{\cdot 1} = \mathrm{Acc}_q$, $p_{11}$ cannot exceed either marginal accuracy:
\begin{equation}
\mathrm{CA}(M_0, M_q) = p_{11} \le \min\left(\mathrm{Acc}_0, \, \mathrm{Acc}_q\right).
\end{equation}

\textbf{Lower Bound:} \\
By the Fr\'echet inequality for joint probabilities:
\begin{equation}
p_{11} \ge \max\left(0, \, p_{1\cdot} + p_{\cdot 1} - 1\right) = \max\left(0, \, \mathrm{Acc}_0 + \mathrm{Acc}_q - 1\right).
\end{equation}
Combining both bounds yields the complete theoretical range for $\mathrm{CA}$.
\end{proof}

\begin{theorem}[Chance-Corrected Reliability via Cohen's Kappa]
The joint prediction outcomes of $M_0$ and $M_q$ can be mapped to chance-corrected decision alignment using Cohen's Kappa statistic ($\kappa$).
\end{theorem}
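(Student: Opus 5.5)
The plan is to treat $(C_0, C_q)$ as two raters assigning binary labels (correct/incorrect) to each prompt, and to write Cohen's $\kappa$ explicitly in terms of the contingency matrix of Table~\ref{tab:joint_contingency}.

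\textbf{Step 1: observed and chance agreement.} The observed agreement is
\begin{equation*}
p_o = p_{11} + p_{00}.
\end{equation*}
Using the marginal identity $p_{00} = 1 - \mathrm{Acc}_0 - \mathrm{Acc}_q + p_{11}$ (as in the proof of Theorem~1), this becomes
\begin{equation*}
p_o = 1 - \mathrm{Acc}_0 - \mathrm{Acc}_q + 2\,\mathrm{CA}.
\end{equation*}
Under independence with the same marginals, the chance agreement is
\begin{equation*}
p_e = \mathrm{Acc}_0\mathrm{Acc}_q + (1-\mathrm{Acc}_0)(1-\mathrm{Acc}_q).
\end{equation*}

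\textbf{Step 2: closed form for $\kappa$.} Set $\kappa = (p_o - p_e)/(1 - p_e)$. Substituting, the numerator simplifies to
\begin{equation*}
p_o - p_e = 2\left(\mathrm{CA} - \mathrm{Acc}_0\mathrm{Acc}_q\right),
\end{equation*}
and the denominator to
\begin{equation*}
1 - p_e = \mathrm{Acc}_0(1-\mathrm{Acc}_q) + \mathrm{Acc}_q(1-\mathrm{Acc}_0).
\end{equation*}
Hence
\begin{equation*}
\kappa = \frac{2\left(\mathrm{CA} - \mathrm{Acc}_0\mathrm{Acc}_q\right)}{\mathrm{Acc}_0 + \mathrm{Acc}_q - 2\,\mathrm{Acc}_0\mathrm{Acc}_q}.
\end{equation*}
With the marginals fixed, this is an affine, strictly increasing function of $\mathrm{CA}$. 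This monotone map is the precise sense of ``can be mapped''.

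\textbf{Step 3: consequences.} Plugging the endpoints of Theorem~\ref{thm:ca_bounds} into the formula gives the attainable range of $\kappa$. The value $\kappa = 0$ occurs exactly when $\mathrm{CA} = \mathrm{Acc}_0\mathrm{Acc}_q$, i.e.\ when correctness is independent. In the equal-accuracy case $\mathrm{Acc}_0 = \mathrm{Acc}_q = \alpha$, the formula reduces to
\begin{equation*}
\kappa = \frac{\mathrm{CA} - \alpha^2}{\alpha(1-\alpha)},
\end{equation*}
so the range $[\max(0, 2\alpha - 1), \alpha]$ from Theorem~1 maps onto $\kappa \in [-\alpha/(1-\alpha), 1]$ when $\alpha \le 1/2$, and onto $[-1, 1]$ otherwise. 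This shows quantitatively that identical accuracies are compatible with anywhere from perfect to worse-than-chance agreement.

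\textbf{Main obstacle.} The real difficulty is the degenerate case $p_e = 1$. This happens precisely when both accuracies lie in $\{0, 1\}$ and are equal, so the denominator vanishes and $\kappa$ is undefined. I would exclude that case explicitly, or adopt the convention $\kappa = 1$ there, since agreement is then perfect. Beyond this, only the routine algebra simplifying $p_o - p_e$ needs to be checked.
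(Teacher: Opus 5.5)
Your Steps 1--2 are correct and prove the statement more substantively than the paper does. The paper's proof stops after writing down $P_o = p_{11}+p_{00}$, the independence baseline $P_e$, and $\kappa = (P_o - P_e)/(1-P_e)$. It then simply asserts two readings: that $\kappa = 1$ means perfect reproduction, and that $\kappa \le 0$ means no better than chance. You go further and eliminate $p_{00}$ through the marginal constraints, obtaining $\kappa = 2(\mathrm{CA} - \mathrm{Acc}_0\mathrm{Acc}_q)/(\mathrm{Acc}_0 + \mathrm{Acc}_q - 2\,\mathrm{Acc}_0\mathrm{Acc}_q)$. This buys several things:
\begin{itemize}
\item It makes ``can be mapped'' precise: for fixed marginals, $\kappa$ is an increasing affine image of $\mathrm{CA}$.
\item It turns the paper's interpretive claims into checkable equivalences. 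For instance, $\kappa \le 0$ iff $\mathrm{CA} \le \mathrm{Acc}_0\mathrm{Acc}_q$. Also, $\kappa = 1$ forces $\mathrm{CA} = (\mathrm{Acc}_0+\mathrm{Acc}_q)/2$, hence equal accuracies and $p_{10}=p_{01}=0$.
\item It lets the CA bounds of the preceding theorem transfer directly to $\kappa$.
\item It exposes the degenerate case $P_e = 1$, which the paper never addresses.
\end{itemize}
The paper's version is shorter but essentially definitional.

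One computation in Step 3 is wrong. For $\alpha > 1/2$ the lower endpoint is $\mathrm{CA} = 2\alpha - 1$, which gives $\kappa = (2\alpha - 1 - \alpha^2)/(\alpha(1-\alpha)) = -(1-\alpha)^2/(\alpha(1-\alpha)) = -(1-\alpha)/\alpha$. This lies strictly inside $(-1,0)$ and equals $-1$ only at $\alpha = 1/2$. Take the paper's own example $\alpha = 0.6$, where $\mathrm{CA}\in[0.2,0.6]$: the attainable range is $[-2/3,\,1]$, not $[-1,1]$. The correct statement is $\kappa \in \bigl[-\min(\alpha,1-\alpha)/\max(\alpha,1-\alpha),\,1\bigr]$. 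This is symmetric under $\alpha \mapsto 1-\alpha$, as it must be, since binary $\kappa$ is invariant under swapping the labels ``correct'' and ``incorrect''. Your asymmetric answer fails that sanity check. Your qualitative conclusion survives, because the lower endpoint is negative for every $\alpha\in(0,1)$: equal accuracies still allow anything from perfect to worse-than-chance agreement. Only the formula needs fixing.
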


\begin{proof}
Let total prediction outcome agreement be $P_o = p_{11} + p_{00} = \mathrm{CA} + p_{00}$. Under the assumption of statistical independence between model prediction indicators, expected random outcome agreement $P_e$ is:
\begin{equation}
\begin{split}
P_e &= (p_{1\cdot} \cdot p_{\cdot 1}) + (p_{0\cdot} \cdot p_{\cdot 0}) \\
&= \text{Acc}_0 \cdot \text{Acc}_q + (1 - \text{Acc}_0)(1 - \text{Acc}_q).
\end{split}
\end{equation}
Evaluating Cohen's Kappa metric:
\begin{equation}
\kappa \triangleq \frac{P_o - P_e}{1 - P_e}.
\end{equation}
When $\kappa = 1$, $M_q$ perfectly reproduces the decision output of $M_0$ across all instances. When $\kappa \le 0$, decision agreement between $M_0$ and $M_q$ is no better than independent random guessing.
\end{proof}

\section{Experimental Results}
\label{sec:experiment}
We quantize the following four models: $\texttt{Llama-3.2-3B}$, $\texttt{Vicuna-7B-v1.5}$, $\texttt{Mistral-7B-v0.1}$, and $\texttt{Llama-3.1-8B}$ into legacy quantization methods (\texttt{Q8\_0}, \texttt{Q5\_0}, \texttt{Q4\_0}) and K-quantization methods(\texttt{Q6\_K}, \texttt{Q5\_K}, \texttt{Q4\_K}, \texttt{Q3\_K}, \texttt{Q2\_K}) using llama.cpp \cite{gerganov2024llamacpp}. We extract the layers' weights of the base models and its quantized variants after dequantization. 

\paragraph{Datasets.}
We evaluate quantized models using both language modeling corpora and downstream benchmark tasks. For perplexity evaluation, we use WikiText-2~\cite{merity2017pointer} and C4~\cite{raffel2020exploring}. To assess task-level performance and behavioral consistency, we use zero-shot benchmarks including HellaSwag~\cite{zellers2019hellaswag}, Winogrande~\cite{sakaguchi2019winogrande}, and ARC (AI2 Reasoning Challenge)~\cite{clark2018think}. These datasets are all multiple-choice benchmarks and cover a range of reasoning capabilities, including commonsense reasoning, pronoun resolution, and science question answering. Specifically, HellaSwag~\cite{zellers2019hellaswag} evaluates commonsense reasoning through context completion, WinoGrande~\cite{sakaguchi2019winogrande} evaluates commonsense reasoning and pronoun resolution, and ARC (AI2 Reasoning Challenge)~\cite{clark2018think} evaluates grade-school science question answering. We evaluated the models on NVIDIA Tesla H100 80GB PCIe. Our Python implementation is included in the submission.

\subsection{Statistical Changes in Attention Layers}
\label{subsec:statistical_Attention_Layers}

For each transformer block $i$, we compute a block-level statistic by averaging the weights of attention matrices: $s(W_i^Q)$, $s(W_i^K)$, $s(W_i^V)$, and $s(W_i^O)$, and then aggregate these values across all $N$ blocks as defined in Eq.~\eqref{eq:stat_mean_score}. This yields a global view of how attention-weight distributions evolve under different quantization levels. The aggregated results are shown in Fig.~\ref{fig:combined_aggregated_stats}.

\begin{figure}[t]
    \centering
    \includegraphics[width=\columnwidth]{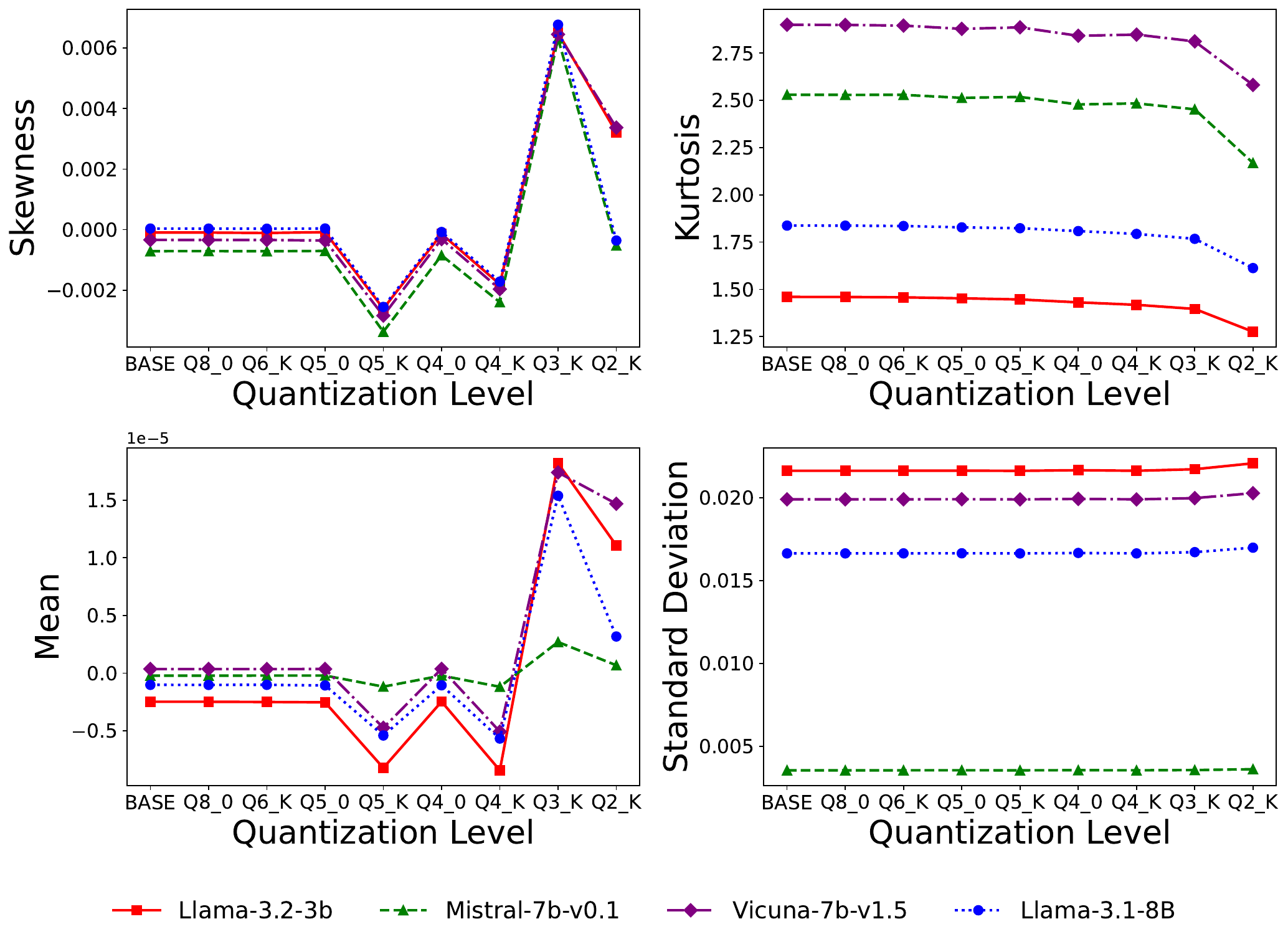}
    \vspace*{-7mm}
    \caption{Aggregated statistical metrics across legacy and K-quantization methods.}
    \label{fig:combined_aggregated_stats}
\end{figure}

Figure~\ref{fig:combined_aggregated_stats} shows that higher-bit quantization schemes (8 to 5 bits) commonly preserve the statistical properties of attention weights, with near-zero mean, low skewness, stable kurtosis, and baseline level standard deviation across models. On the other hand, low-bit quantization (4 to 2 bits), particularly \texttt{Q3\_K} and \texttt{Q2\_K}, introduces large distortions, including increased skewness and standared deviation, reduced kurtosis, and systematic mean shifts. These effects are model dependent, with smaller models such as $\texttt{Llama-3.2-3B}$ exhibiting larger shifts. A notable finding is that \texttt{Q5\_0} closely matches the base model across all statistical metrics, indicating strong preservation of distributional structure at moderate compression.

\paragraph{Individual Statistic Comparison.}
To study the impact of quantization on individual attention projection component, we calculate statistics independently for each attention projection type ($K$, $Q$, $V$, $O$) by averaging $s(W_i^L)$ across all $N$ blocks, as defined in Eq.~\eqref{eq:attentopn_mean_stat}. For legacy quantization schemes (\texttt{Q8\_0}, \texttt{Q5\_0}, \texttt{Q4\_0}), layer-wise statistics remain almost identical to the base model across all metrics, which indicates that legacy quantization has minimal structural impact. On the other hand, K-quantization results in Fig.~\ref{fig:Avg_skewness_K_quant_main_models} for skew and Fig.~\ref{fig:Avg_kurtosis_K_quant_main_models} for kurtosis shows layer dependent sensitivity as precision decreases. Moderate K-quantization levels (\texttt{Q6\_K}–\texttt{Q4\_K}) preserve attention-layer statistics across $Q$, $K$, $V$, and $O$, maintaining stable distribution shape, tail behavior, and variance. However, aggressive low-bit settings (\texttt{Q3\_K} and \texttt{Q2\_K}) lead to large distortions, especially in the $Q$ and $K$ projections, where skewness becomes volatile, kurtosis collapses, and mean and variance deviate sharply from the base model. The $V$ and $O$ projections remain stable under the same conditions, except for the \texttt{Q2\_K}. It is clear from this results that legacy quantization introduces negligible statistical distortion, but K-quantization introduces non-uniform, layer-dependent effects at low bit-widths. In particular, the $Q$ and $K$ projections are the most sensitive to aggressive quantization across all evaluated models.

\begin{figure}[tb]
    \centering
    \includegraphics[width=\columnwidth]{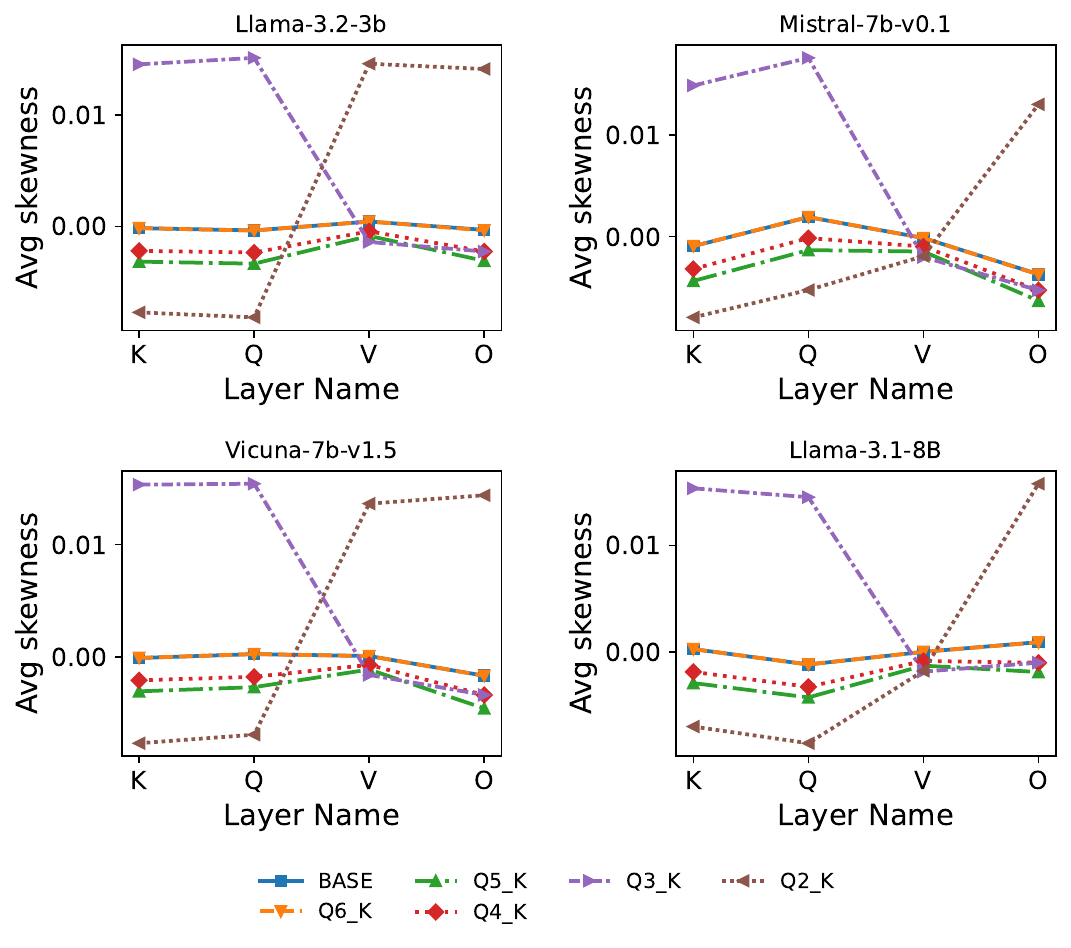}
    \vspace*{-7mm}
    \caption{Average skewness across attention layers under K-quantization.}
    \label{fig:Avg_skewness_K_quant_main_models}
\end{figure}

\begin{figure}[tb]
    \centering
    \includegraphics[width=\columnwidth]{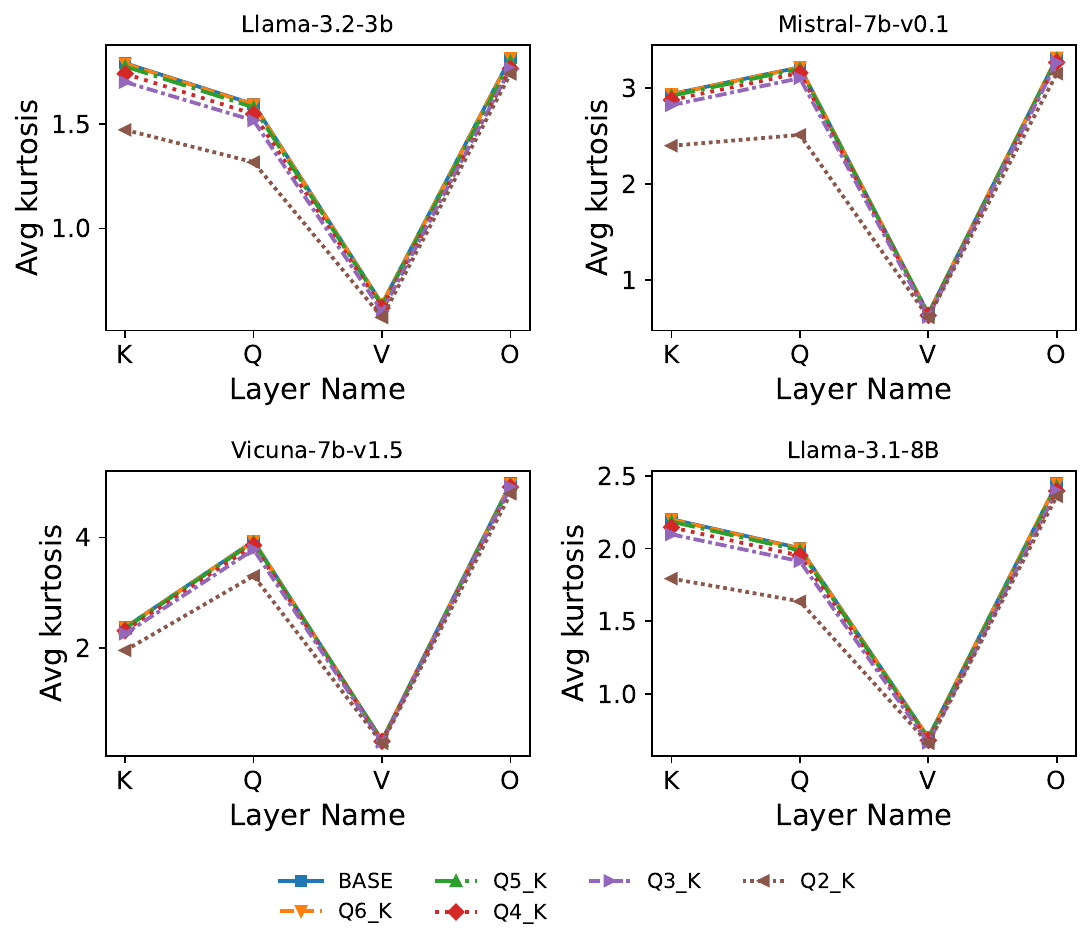}
    \vspace*{-7mm}
    \caption{Average kurtosis across attention layers under K-quantization.}
    \label{fig:Avg_kurtosis_K_quant_main_models}
\end{figure}


\subsection{Distributional Divergence in Attention Layers}

We complement the statistical analysis with distributional divergence measures to study how quantization changes the structure of attention weight distributions. For each transformer block $i$, we compute divergence metrics between the base and quantized weight matrices for each attention projection $L \in \{Q, K, V, O\}$, and aggregate these values across blocks as defined in equation \ref{eq:distributional_divergence}.

\begin{figure}[t] 
    \centering
    \includegraphics[width=\columnwidth]{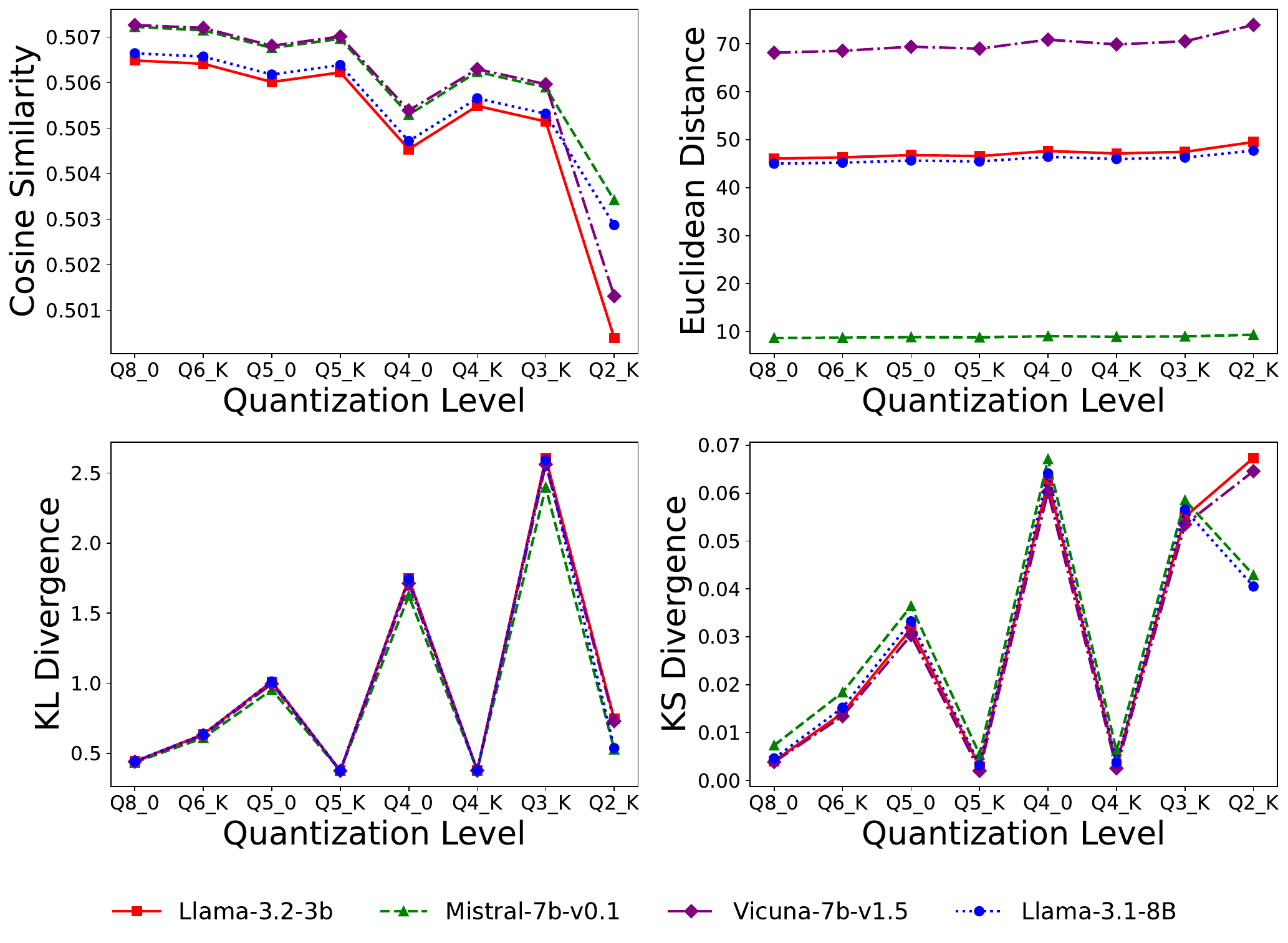}
    \vspace*{-7mm}
    \caption{Comparison of aggregated similarity and divergence metrics across quantization methods.}
    \label{fig:combined_metrics_single_column}
\end{figure}

\paragraph{Aggregated Divergence Across Quantization Levels.}
Figure~\ref{fig:combined_metrics_single_column} presents the aggregated divergence trends across quantization levels for all models. Cosine similarity remains almost stable across high bit quantizations (Q8\_0 through Q5\_K), drops slightly at Q4\_0, recovers at Q4\_K and Q3\_K, and drop sharply at Q2\_K. Euclidean distance varies across quantization settings and is less sensitive to low bit quantization. KL Divergence fluctuates sharply depending on the specific quantization format. Spikes occur at legacy quant or lower-precision quants (Q5\_0, Q4\_0, and a large peak at Q3\_K), whereas K-quants (Q5\_K, Q4\_K) maintain significantly lower KL divergence across all models. KS divergence behave similar to KL divergence, with prominent peaks at Q5\_0, Q4\_0, Q3\_K, and Q2\_K, while Q5\_K and Q4\_K achieve near-zero divergence across models.

\begin{figure}[t] 
    \centering
    \includegraphics[width=\columnwidth]{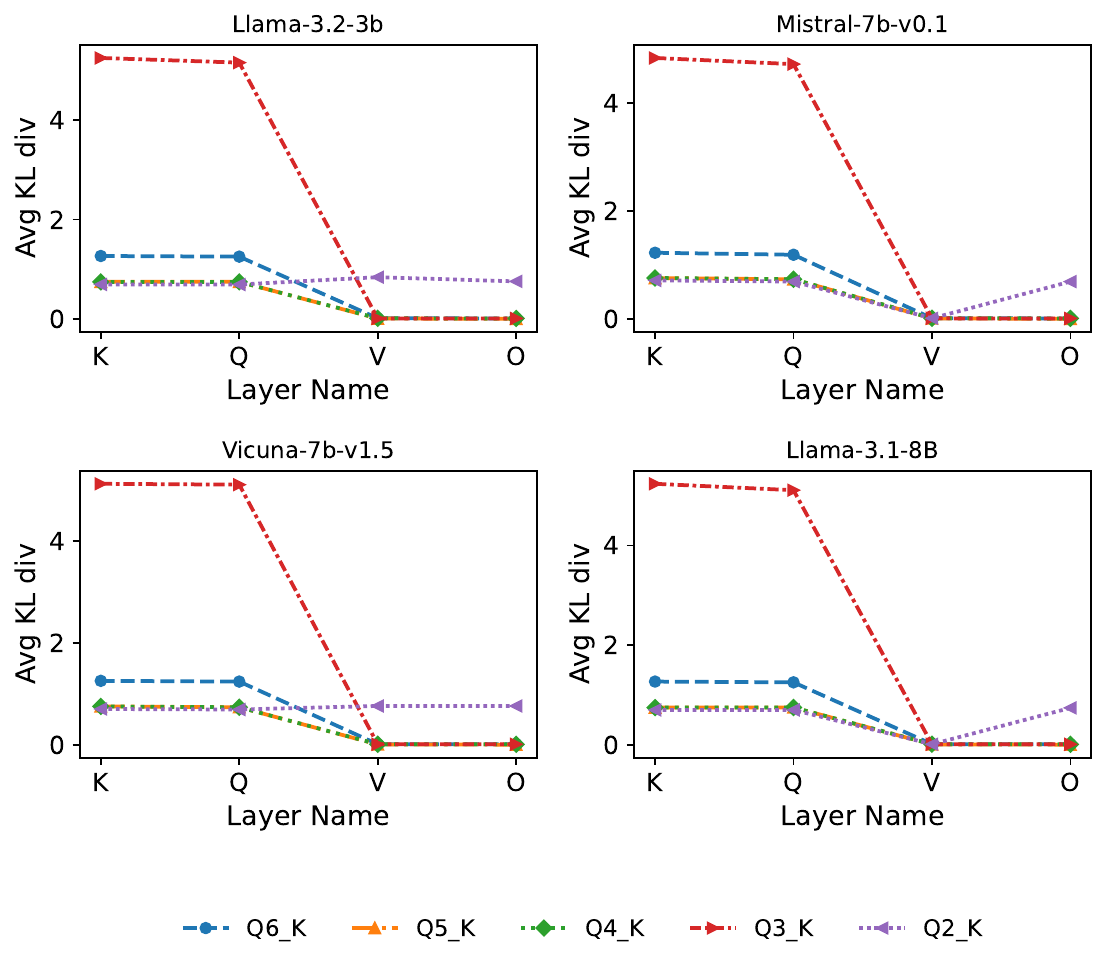}
    \vspace*{-7mm}
    \caption{Average KL  K-Quantization.}
    \label{fig:Avg_KL_div_K_quant_main_models}
\end{figure}

\paragraph{Layer-Specific Divergence.}
To study the impact of quantization on individual attention projection component, we compute divergence metrics individually for each attention projection. Figure~\ref{fig:Avg_KL_div_K_quant_main_models} presents the per layer KL divergence under K-quantization. The $Q$ and $K$ projections exhibit the highest sensitivity to quantization distortion across almost all schemes, where both of them experience a massive spike in KL divergence (exceeding 5.0) when quantized with Q3\_K. Both  $Q$ and $K$ have consistent, lower divergence baseline (~0.75) across Q5\_K, Q4\_K, and Q2\_K. Across Q6\_K, Q5\_K, Q4\_K, and Q3\_K, the KL divergence drops sharply from its peak in K and Q down to 0.0 (or near-zero) at the V projection. Q2\_K does not follow this recovery trend. Instead, its divergence remains around ~0.75 at the V layer. Similar to the V layer, O maintains a KL divergence of 0.0 across Q6\_K, Q5\_K, Q4\_K, and Q3\_K. Unlike all high precision schemes where the output projection shows no divergence change, Q2\_K remains around ~0.75, indicating that 2-bit quantization affects projection differently than 3-bit to 6-bit quantization. 
 
\subsection{Performance Evaluations}
\label{subsec:Evaluations}

\begin{table}[htbp]
\centering
\caption{\textbf{Table 2.} Perplexity ($\downarrow$) on WikiText2.}
\label{tab:perplexity_wikitext2}
\begin{tabular}{lcccc}
\toprule
Quant & Llama-3B & Vicuna & Mistral & Llama-8B \\
\midrule
Base  & 8.16  & 7.31  & 5.54 & 6.45 \\
Q8\_0 & \textcolor{green!60!black}{10.23} & \textcolor{green!60!black}{9.24} & \textcolor{green!60!black}{6.98} & \textcolor{green!60!black}{8.12} \\
Q6\_K & 10.24 & 9.27 & 7.00 & 8.15 \\
Q5\_0 & 10.42 & 9.36  & 7.09 & 8.34 \\
Q5\_K & 10.29 & 9.29  & 7.03 & 8.20 \\
Q4\_0 & 10.87 & 9.62  & 7.27 & 8.73 \\
Q4\_K & 10.50 & 9.35  & 7.06 & 8.35 \\
Q3\_K & 11.44 & 9.58  & 7.22 & 8.98 \\
Q2\_K & \textcolor{red}{16.95} & \textcolor{red}{11.66} & \textcolor{red}{8.40} & \textcolor{red}{12.56} \\
\bottomrule
\end{tabular}
\end{table}
\begin{table}[htbp]
\centering
\caption{\textbf{Table 3.} Perplexity ($\downarrow$) on C4.}
\label{tab:perplexity_c4}
\begin{tabular}{lcccc}
\toprule
Quant & Llama-3B & Vicuna & Mistral & Llama-8B \\
\midrule
Base  & 11.47 & 10.22 & 8.17 & 9.68 \\
Q8\_0 & \textcolor{green!60!black}{12.98} & 11.90 & \textcolor{green!60!black}{9.41} & \textcolor{green!60!black}{10.94} \\
Q6\_K & 13.03 & 11.90 & 9.43 & 10.99 \\
Q5\_0 & 13.25 & 12.06 & 9.45 & 11.25 \\
Q5\_K & 13.10 & \textcolor{green!60!black}{11.89} & 9.45 & 11.05 \\
Q4\_0 & 14.00 & 12.38 & 9.62 & 11.85 \\
Q4\_K & 13.38 & 11.93 & 9.49 & 11.30 \\
Q3\_K & 14.80 & 12.00 & 9.51 & 12.29 \\
Q2\_K & \textcolor{red}{21.50} & \textcolor{red}{13.95} & \textcolor{red}{10.53} & \textcolor{red}{16.96} \\
\bottomrule
\end{tabular}
\end{table}

We evaluate quantized models using both language modeling and downstream benchmark tasks to assess how quantization affects surface-level performance and decision-level behavior.

\paragraph{Perplexity.}
We measure language modeling perplexity on WikiText 2 \cite{merity2017pointer} and C4 \cite{raffel2020exploring}.Perplexity does not follow a consistent trend with respect to bit width, and the effect depends on the model and dataset. Tables~\ref{tab:perplexity_wikitext2} and~\ref{tab:perplexity_c4} show that several mid range quantization preserve perplexity close to the Q8\_0 model for all models. In particular, K-quantization configurations from Q6\_K to Q4\_K avoid the sharp degradation observed under low bit quantization. Q3\_K can produce high perplexity in most cases, but this behavior cannot be generalized because it is model and dataset dependent. The results are proof that perplexity is not a trusted metric for preserved decisions under quantization. In most cases, a clear increase in perplexity is observed for Q2\_K.

\begin{table*}[t]
\centering
\small
\setlength{\tabcolsep}{3.5pt}
\renewcommand{\arraystretch}{1.05}
\caption{Accuracy of ARC, HellaSwag and Winogrande.}
\begin{tabular}{l ccc ccc ccc ccc}
\hline
& \multicolumn{3}{c}{Llama-3B} & \multicolumn{3}{c}{Vicuna-7B} & \multicolumn{3}{c}{Mistral-7B} & \multicolumn{3}{c}{Llama-8B} \\
Quant & Arc & Hell. & Wino. & Arc & Hell. & Wino. & Arc & Hell. & Wino. & Arc & Hell. & Wino. \\
\hline
Base  &0.741&0.415&0.528&0.594&0.374&0.52&0.808&0.457&0.568&0.811&0.432&0.566\\
Q8\_0 &\textcolor{green!60!black}{0.628}&0.432&0.506&0.699&0.488&0.512&0.71&\textcolor{green!60!black}{0.476}&0.507&0.642&\textcolor{green!60!black}{0.468}&\textcolor{green!60!black}{0.514}\\
Q6\_K &0.621&0.432&0.502&0.703&0.485&0.52&0.71&\textcolor{green!60!black}{0.476}&0.506&0.646&\textcolor{green!60!black}{0.468}&0.512\\
Q5\_0 &0.621&0.428&0.507&0.693&0.488&0.51&0.707&0.472&0.508&0.643&0.467&0.501\\
Q5\_K &0.619&0.43&0.509&0.696&\textcolor{green!60!black}{0.49}&\textcolor{green!60!black}{0.521}&0.71&0.475&0.505&\textcolor{green!60!black}{0.647}&0.462&0.513\\
Q4\_0 &0.623&0.427&\textcolor{green!60!black}{0.516}&0.706&0.481&\textcolor{red}{0.508}&0.711&\textcolor{green!60!black}{0.476}&\textcolor{green!60!black}{0.51}&0.646&0.465&0.506\\
Q4\_K &0.62&\textcolor{green!60!black}{0.434}&0.513&0.693&0.485&0.516&\textcolor{green!60!black}{0.712}&0.471&0.5&0.645&\textcolor{green!60!black}{0.468}&0.509\\
Q3\_K &0.604&0.425&0.509&\textcolor{green!60!black}{0.708}&0.483&0.515&0.694&0.465&\textcolor{red}{0.499}&0.646&0.462&0.511\\
Q2\_K &\textcolor{red}{0.542}&\textcolor{red}{0.388}&\textcolor{red}{0.5}&\textcolor{red}{0.643}&\textcolor{red}{0.478}&\textcolor{red}{0.508}&\textcolor{red}{0.682}&\textcolor{red}{0.464}&0.506&\textcolor{red}{0.628}&\textcolor{red}{0.438}&\textcolor{red}{0.5}\\
\hline
\end{tabular}
\label{tab:acc}
\end{table*}

\paragraph{Downstream Accuracy.}
We evaluate zero-shot task performance on ARC~\cite{clark2018think}, HellaSwag~\cite{zellers2019hellaswag} and Winogrande~\cite{sakaguchi2019winogrande}. As shown in Table~\ref{tab:acc}, Across the evaluated models and datasets, quantization from Q8\_0 down to Q4\_K maintains accuracy levels very close to the base models, most of the time accracy are equal or even exceeds the base performance. This trend is particularly appears in Vicuna-7B, where quantized variants outperform the base model across ARC and HellaSwag. This trend also on the HellaSwag across all other model (e.g., Llama-3B improves from a base score of $0.415$ to between $0.434$ and $0.425$ under Q8–Q3 formats). Models' Performance on Winogrande exhibits small degradation within this range, showing small drops such as a $1.2\%$ reduction for Llama-3B (from $0.528$ to $0.516$ at Q4\_0) and a $5.2\%$ drop for Llama-8B at Q8\_0. Transitioning to Q3\_K shows a slight degradation compared to higher bit rates, but it shows solid overall utility and sometimes produces top or lowest performance, such as achieving the top ARC score of $0.708$ for Vicuna-7B. In contrast, Q2\_K represents a clear degradation threshold notably on the ARC benchmark, where performance degrades by $20\%$ for Llama-3B (dropping from $0.741$ to $0.542$) and $12.6\%$ for Mistral-7B (falling from $0.808$ to $0.682$).

All models and their variants are evaluated on the multiple-choice datasets HellaSwag, Winogrande, and ARC using deterministic log-likelihood scoring, without token sampling. In other words, for each question, the model scores the possible answer choices and selects the one with the highest score. The same input produces the same prediction every time because no random sampling is involved during inference, so the repeated runs give same results ($\sigma = 0.000$).

\begin{table*}[t]
\centering
\small
\setlength{\tabcolsep}{3.5pt}
\renewcommand{\arraystretch}{1.05}
\caption{Correctness Agreement of ARC, HellaSwag and Winogrande.}
\begin{tabular}{l ccc ccc ccc ccc}
\hline
& \multicolumn{3}{c}{Llama-3B} & \multicolumn{3}{c}{Vicuna-7B} & \multicolumn{3}{c}{Mistral-7B} & \multicolumn{3}{c}{Llama-8B} \\
Quant & Arc & Hell. & Wino. & Arc & Hell. & Wino. & Arc & Hell. & Wino. & Arc & Hell. & Wino. \\
\hline
Q8\_0 &\textcolor{green!60!black}{0.573}&\textcolor{green!60!black}{0.381}&0.274&0.510&0.342&0.288&0.678&0.435&0.320&0.620&\textcolor{green!60!black}{0.404}&\textcolor{green!60!black}{0.323}\\
Q6\_K &0.569&\textcolor{green!60!black}{0.381}&0.272&\textcolor{green!60!black}{0.511}&0.342&0.294&\textcolor{green!60!black}{0.682}&\textcolor{green!60!black}{0.436}&0.320&0.624&\textcolor{green!60!black}{0.404}&0.318\\
Q5\_0 &0.564&0.377&0.274&0.502&0.342&0.281&0.678&0.433&0.323&0.620&\textcolor{green!60!black}{0.404}&0.316\\
Q5\_K &0.571&0.377&0.275&0.509&\textcolor{green!60!black}{0.343}&\textcolor{green!60!black}{0.296}&0.679&0.434&0.322&\textcolor{green!60!black}{0.625}&0.401&0.321\\
Q4\_0 &0.569&0.377&0.279&\textcolor{green!60!black}{0.511}&\textcolor{red}{0.338}&\textcolor{red}{0.268}&0.678&0.434&0.324&0.615&0.401&0.311\\
Q4\_K &\textcolor{green!60!black}{0.573}&0.380&\textcolor{green!60!black}{0.280}&0.502&\textcolor{green!60!black}{0.343}&0.294&0.680&0.433&\textcolor{red}{0.314}&0.619&\textcolor{green!60!black}{0.404}&0.317\\
Q3\_K &0.552&0.379&\textcolor{red}{0.264}&0.509&0.341&0.292&0.667&0.430&0.322&0.618&0.401&0.322\\
Q2\_K &\textcolor{red}{0.502}&\textcolor{red}{0.357}&0.272&\textcolor{red}{0.475}&0.339&0.278&\textcolor{red}{0.653}&\textcolor{red}{0.423}&\textcolor{green!60!black}{0.329}&\textcolor{red}{0.593}&\textcolor{red}{0.384}&\textcolor{red}{0.307}\\
\hline
\end{tabular}
\label{tab:ca}
\end{table*}

\paragraph{Correctness Agreement.} 
In addition to accuracy, Table~\ref{tab:ca} presents the results of correctness agreement that measures when both the base model and its quantized counterpart answer a given prompt correctly to capture the behavioral fidelity at the sample level. Mistral-7B performs the highest agreement scores across all benchmarks, highest on ARC at $0.682$ (Q6\_K) and on HellaSwag at $0.436$ (Q6\_K). Similarly, Llama-8B exhibits good behavioral alignment on ARC, ranging between $0.615$ (Q4\_0) and $0.625$ (Q5\_K). Llama-3B and Vicuna-7B maintain consistent agreement bands, reaching highest ARC agreements of $0.573$ (Q8\_0 and Q4\_K) and $0.511$ (Q6\_K and Q4\_0), respectively. Across all models, CA on Winogrande is clearly lower ($0.264$--$0.329$) compared to ARC and HellaSwag which reflect broader sample-level agreement differences on task-specific predictions even when accuracy remains stable. For low bit-width, Q3\_K introduces minor decreases in agreement across most datasets such as ARC dropping to $0.552$ on Llama-3B and $0.667$ on Mistral-7B—while having comparable behavior on HellaSwag. Q2\_K marks a larger drop in correctness agreement across nearly all configurations, with ARC suffering the largest declines to $0.502$ for Llama-3B, $0.475$ for Vicuna-7B, $0.653$ for Mistral-7B, and $0.593$ for Llama-8B, confirming substantial departure from base model predictions under extreme quantization. The table also demonstrate that quantization produce a noticeable behavioral divergence. For example, an agreement score of $0.682$, the highest observed across all evaluated models and benchmarks, implies a decision flip rate of $31.8\%$ relative to the base model. 
 
\begin{table*}[t]
\centering
\small
\setlength{\tabcolsep}{3.5pt}
\renewcommand{\arraystretch}{1.05}
\caption{Cohen's kappa of ARC, HellaSwag and Winogrande.}
\begin{tabular}{l ccc ccc ccc ccc}
\hline
& \multicolumn{3}{c}{Llama-3B} & \multicolumn{3}{c}{Vicuna-7B} & \multicolumn{3}{c}{Mistral-7B} & \multicolumn{3}{c}{Llama-8B} \\
Quant & Arc & Hell. & Wino. & Arc & Hell. & Wino. & Arc & Hell. & Wino. & Arc & Hell. & Wino. \\
\hline
Q8\_0 & \textcolor{green!60!black}{0.659} & \textcolor{green!60!black}{0.815} & 0.029 & \textcolor{green!60!black}{0.553} & 0.648 & 0.100 & 0.752 & 0.873 & 0.122 & 0.678 & 0.817 & 0.120 \\
Q6\_K & 0.658 & \textcolor{green!60!black}{0.815} & 0.029 & 0.547 & \textcolor{green!60!black}{0.659} & 0.106 & \textcolor{green!60!black}{0.760} & \textcolor{green!60!black}{0.877} & 0.124 & \textcolor{green!60!black}{0.687} & \textcolor{green!60!black}{0.821} & 0.103 \\
Q5\_0 & 0.640 & 0.808 & 0.028 & 0.548 & 0.651 & 0.068 & 0.751 & 0.875 & 0.132 & 0.675 & 0.817 & 0.115 \\
Q5\_K & \textcolor{green!60!black}{0.659} & 0.801 & 0.027 & 0.551 & 0.655 & 0.112 & 0.754 & 0.873 & 0.132 & \textcolor{green!60!black}{0.687} & 0.816 & 0.113 \\
Q4\_0 & 0.648 & 0.798 & 0.028 & 0.543 & \textcolor{red}{0.644} & \textcolor{red}{0.019} & 0.745 & 0.864 & 0.133 & 0.659 & 0.812 & 0.089 \\
Q4\_K & 0.658 & 0.805 & \textcolor{green!60!black}{0.038} & 0.534 & 0.652 & \textcolor{green!60!black}{0.114} & 0.754 & \textcolor{green!60!black}{0.877} & \textcolor{red}{0.111} & 0.668 & 0.817 & 0.107 \\
Q3\_K & 0.623 & 0.805 & \textcolor{red}{-0.016} & 0.543 & 0.656 & 0.103 & 0.736 & 0.867 & 0.146 & 0.656 & 0.812 & \textcolor{green!60!black}{0.125} \\
Q2\_K & \textcolor{red}{0.545} & \textcolor{red}{0.789} & 0.034 & \textcolor{red}{0.515} & 0.656 & 0.059 & \textcolor{red}{0.709} & \textcolor{red}{0.853} & \textcolor{green!60!black}{0.164} & \textcolor{red}{0.605} & \textcolor{red}{0.801} & \textcolor{red}{0.086} \\
\hline
\end{tabular}
\label{tab:kappa}
\end{table*}

Table~\ref{tab:kappa} reports Cohen's kappa measures the inter-rater agreement (or prediction consistency) between the base model and its quantized counterpart. 

Across all evaluated models, Cohen's kappa ($\kappa$) demonstrates moderate to substantial agreement on ARC ($\kappa \approx 0.534$--$0.760$) and HellaSwag ($\kappa \approx 0.644$--$0.877$) down to Q4\_K, with Mistral-7B reaching the highest behavioral consistency (peaking at $\kappa = 0.877$ on HellaSwag under Q6\_K and Q4\_K). On the other hand, Winogrande shows low agreement across all model families ($\kappa \le 0.164$, dropping as low as $\kappa = -0.016$ for Llama-3B at Q3\_K), indicating that prediction overlap on this dataset is very low regardless of quantization level. A gradual decline in $\kappa$ is observed across ARC and HellaSwag as precision decreases to Q3\_K, followed by a sharper drop at Q2\_K on ARC, where Llama-3B and Llama-8B decline to $\kappa = 0.545$ and $\kappa = 0.605$, underscoring severe degradation in decision boundary alignment under extreme 2-bit quantization.

The primary objective of quantization is to construct a quantized model that faithfully mirrors the parent model properties, "like father, like son", relationship where instance-level decisions remain identical. However, our empirical evaluation reveals a critical gap because while mid-range schemes (\texttt{Q8\_0} to \texttt{Q4\_K}) maintain near identical perplexity and accuracy, decision-level metrics expose clear divergence, peaking at only $0.682$ Correctness Agreement and $\kappa = 0.877$. This demonstrates that no quantized model acts identically to its base parent.

\section{Discussion}
This section synthesizes our experimental findings to assess the impact of quantization on language model behavior, statistical stability, and structural fidelity.

\textbf{Intersection of Structural and Behavioral Metrics.}
Weight divergence metrics directly account for the behavioral trends observed across perplexity and accuracy evaluations. The structural stability maintained down to \texttt{Q5\_K} explains the preserved decision-boundary fidelity ($\kappa \ge 0.80$ on HellaSwag) and near-baseline accuracy. As weight distributions begin to degrade at \texttt{Q4} and \texttt{Q3\_K} evidenced by rising KL divergence and kurtosis collapse sample-level alignment ($CA$ and $\kappa$) deteriorates before major drops in accuracy occur. Finally, extreme distribution distortions and severe drops in cosine similarity at \texttt{Q2\_K} coincide with global accuracy collapse and sharp perplexity spikes across all models. This confirm that functional degradation stems directly from structural weight matrix deformation.

\textbf{Sensitivity of Attention Projections.}
Our analyses reveal that the key ($\mathbf{K}$) and query ($\mathbf{Q}$) projections are the most sensitive to quantization. Under \texttt{Q3\_K} and \texttt{Q2\_K}, these layers exhibit the largest spikes in skewness, the greatest shifts in mean, and the steepest drops in kurtosis. In contrast, the value ($\mathbf{V}$) layer preserves its low-kurtosis profile even under moderate compression, while output ($\mathbf{O}$) projections remain relatively stable until the lowest quantization regimes. Weight divergence metrics reinforce this hierarchy, showing that cosine similarity and KL divergence deviate most severely for $\mathbf{K}$ and $\mathbf{Q}$.

\textbf{Silent Instability: Instance-Level Prediction Flipping.}
Quantization alters model decision-making relative to base checkpoints across all  bit level. A critical finding is the disconnect between performance metrics (perplexity and overall accuracy) and instance-level output consistency ($CA$ and $\kappa$). Aggregate accuracy frequently masks underlying prediction swaps. For instance, on Winogrande, Vicuna-7B achieves an identical accuracy of $0.52$ at both base and \texttt{Q6\_K}; however, instance-level analysis reveals a low Correctness Agreement ($CA = 0.294$) and Cohen's kappa ($\kappa = 0.106$). This demonstrates that identical accuracy can mask internal prediction flipping, where the quantized model loses correctness on one subset of prompts while gaining it on another by chance. 

\textbf{Impact of Quantization Schemes.}
Behavioral fidelity does not scale monotonically with nominal bit precision. For Vicuna-7B and Mistral-7B, highest agreement ($CA$) is not achieved at \texttt{Q8\_0}, but rather at intermediate levels such as \texttt{Q6\_K} ($CA = 0.511$ on ARC) and \texttt{Q5\_K} ($CA = 0.343$ on HellaSwag and Winogrande). Notably, modern K-quantization schemes (\texttt{Q3\_K} and \texttt{Q4\_K}) frequently match or outperform legacy schemes with higher precision (e.g., \texttt{Q5\_0}). This indicates that block-wise quantization architecture design plays as pivotal a role in preserving behavioral fidelity as total bit-width allocation.

\textbf{Limitations of Perplexity and Aggregate Accuracy Benchmarks.}
The majority of existing post-training quantization methods are evaluated using perplexity and accuracy. However, our findings demonstrate that these metrics are unable to capture behavioral consistency relative to base model. Consequently, evaluating quantized models on perplexity or accuracy overlooks internal prediction flips, masking substantial behavioral divergence and providing a false sense of reliability in downstream deployment.

\section{Conclusion}
In this paper, we formalized post-training quantization as an operator on parameter space and introduced \textit{Correctness Agreement}. We proved theoretically and empirically that matching top line accuracy masks substantial instance-level decision instability. Our evaluation reveals that while mid-range quantization schemes (\texttt{Q8\_0} to \texttt{Q4\_K}) preserve surface-level performance, decision-boundary alignment diverges under the hood. Distributional analysis links these behavioral shifts directly to layer-wise parameter perturbations, identifying query ($Q$) and key ($K$) attention projections as uniquely vulnerable components that undergo severe kurtosis collapse and distribution drift under aggressive compression (\texttt{Q3\_K} and \texttt{Q2\_K}). Ultimately, our findings demonstrate that no quantized model acts identically to its base parent. As LLMs are increasingly deployed in safety-critical and resource-constrained environments, evaluating instance-level behavioral fidelity must become a standard requirement alongside efficiency and aggregate metrics.


{\footnotesize
\bibliographystyle{unsrt}
\bibliography{llm}}



\end{document}